\lstdefinestyle{asp}{basicstyle=\ttfamily\scriptsize\upshape}
\lstdefinestyle{pseudo}{basicstyle=\ttfamily\footnotesize\upshape}
\newtheorem{theorem}{Theorem}
\newtheorem{definition}{Definition}
\newtheorem*{example*}{Example}
\newcommand{\comment}[1]{}
\newcommand{\smallmath}[1]{\mbox{\scriptsize ${#1}$}}
\newcommand{\tinymath}[1]{\mbox{\tiny ${#1}$}}
\newcommand{\cnL}{\mbox{${\cal L}$}}            
\newcommand{\cnSs}{\mbox{${\cal S}$}}           
\newcommand{\cnAs}{\mbox{${\cal A}$}}           
\newcommand{\cnTs}{\mbox{${\cal T}$}}           
\newcommand{\cnOs}{\mbox{${\cal O}$}}           
\newcommand{\cnCs}{\mbox{${\cal C}$}}           
\newcommand{\cnPSs}{\mbox{${\cal P}$}}          
\newcommand{\cnUs}{\mbox{${\cal U}$}}           
\newcommand{\cnN}{\mbox{${N}$}}                 
\newcommand{\cnPs}{\mbox{$\Pi$}}                
\newcommand{\cnPUOs}{\mbox{${\Gamma}$}}         
\newcommand{\cnPO}{\mbox{$\lambda$}}            
\newcommand{\cnOUO}{\mbox{$\tau$}}              
\newcommand{\cnPUO}{\mbox{$\gamma$}}            
\newcommand{\cnTLO}{\mbox{$\mu$}}               
\newcommand{\cnUUF}{\mbox{${\delta}$}}          
\newcommand{\cnS}{\mbox{$s$}}                   
\newcommand{\cnP}{\mbox{$\pi$}}                 
\newcommand{\cnPS}{\mbox{$p$}}                  
\newcommand{\cnCS}{\cnS^{c}}                    
\newcommand{\cnPUS}{\cnS^{\smallmath{\cnPUO}}}       
\newcommand{\cnCUS}{\cnS^{\smallmath{\cnOUO}}}       
\newcommand{\cnH}{\mbox{$H$}}                   
\newcommand{\cnNs}{\mbox{${\cal N}$}}                  
\newcommand{\cnFPs}{\mbox{${F}$}}                      
\newcommand{\cnAN}{\mbox{$Q$}}                         
\newcommand{\cnAH}{\mbox{$\cal X$}}                    
\newcommand{\cnANs}{\mbox{$\cal Q$}}                   
\newcommand{\cnSF}{\mbox{$\phi$}}                      
\newcommand{\cnTF}{\mbox{$\psi$}}                      
\newcommand{\cnCF}{\mbox{$\varrho$}}                   
\newcommand{\cnUTF}{\mbox{$\sigma$}}                   
\newcommand{\cnPredUpdate}{\mbox{\text{\bf PredictionUpdate}}}      
\newcommand{\cnCorrUpdate}{\mbox{\text{\bf CorrectionUpdate}}}      
\newcommand{\cnTransFuncUpdate}{\mbox{\text{\bf TransFuncUpdate}}}  
\newcommand{\cnUtilUpdate}{\mbox{\text{\bf UtilityUpdate}}}         
\newcommand{\cnActionUpdate}{\mbox{\text{\bf ActionUpdate}}}         
\newcommand{\cnUpdatePass}{\mbox{\text{\bf UpdatePass}}}       
\newcommand{\cnPredUpdateP}{\mbox{\text{\bf P}}}      
\newcommand{\cnCorrUpdateP}{\mbox{\text{\bf C}}}      
\newcommand{\cnTransFuncUpdateP}{\mbox{\text{\bf T}}}  
\newcommand{\cnUtilUpdateP}{\mbox{\text{\bf U}}}         
\newcommand{\cnActionUpdateP}{\mbox{\text{\bf A}}}        
\newcommand{\cnUpdate}{\mbox{\text{\bf Update}}}                        
\title{Online Learning and Planning in Cognitive Hierarchies}
\author{Bernhard Hengst
\and
Maurice Pagnucco
\and
\vspace{0.5mm}
David Rajaratnam\\
\vspace{1mm}
{\bf \Large Claude Sammut
\and
Michael Thielscher}\\
School of Computer Science and Engineering\\
The University of New South Wales, Australia
}
\begin{document}
\maketitle

\begin{abstract}
  Complex robot behaviour typically requires the integration of multiple robotic and Artificial Intelligence (AI) techniques and components. Integrating such disparate components into a coherent system, while also ensuring global properties and behaviours, is a significant challenge for cognitive robotics. Using a formal framework to model the interactions between components can be an important step in dealing with this challenge. In this paper we extend an existing formal framework~\cite{clark2016framework} to model complex integrated reasoning behaviours of robotic systems; from symbolic planning through to online learning of policies and transition systems. Furthermore the new framework allows for a more flexible modelling of the interactions between different reasoning components.

\end{abstract}


\section{Introduction}

Complex robot behaviour typically requires the integration of multiple robotic and Artificial Intelligence (AI) techniques and components. For example, a typical social/domestic robot must integrate vision, speech recognition, motion planning, as well as high-level control. More concretely, the object manipulation task, where a robot must manipulate objects to achieve a goal, requires the combination of high-level task planning as well as geometric reasoning and motion planning~\cite{DBLP:conf/icra/Lozano-PerezJMO87}. Integrating such disparate components into a coherent system, while also ensuring global properties and behaviours, is a significant challenge for robotics.

Prior research in this area has mostly focused on integrating components on a case-by-case basis. For example, the object manipulation challenge has been tackled by combining symbolic planning with probabilistic sampling~\cite{DBLP:journals/ijrr/CambonAG09,DBLP:conf/wafr/GarrettLK14}. However, such individual cases do not address the larger challenge of developing general principles for integration or incorporating sub-systems into larger, more complex, systems.

Dealing with the more general integration issue has largely been left as a software development challenge. Diagrammatic or descriptive modelling is performed, for example using the Robot Control System (RCS)~\cite{Albus-2001}, followed by integration using a software framework, such as ROS~\cite{Quigley09}. However, while software frameworks provide the programming tools and middleware required to build and integrate robotic components, they do not, and were not intended to, provide a structured basis on which to analyse robot design and establish global behavioural properties. As a result, while the behaviour of individual components of a system may be well-understood, the behaviour of the system as a whole is often unclear.

A solution to this problem has been to use a formal framework for integrating disparate representations into a cognitive hierarchy. The framework introduced by~\citeauthor{clark2016framework}~(\citeyear{clark2016framework}), and extended by~\citeauthor{hengst2018context}~(\citeyear{hengst2018context}), adopts a meta-theoretic approach to modelling cognitive robotic systems. It formalises the interaction between nodes in a cognitive hierarchy, while making no commitments about the details within individual nodes. It presents the possibility for a more structured approach to the integration challenge and the potential for establishing global properties of a system.

As it stands, the formal framework has been mainly applied to cases where the robot's behaviour has been largely reactive, and has only limited support for more complex deliberative reasoning, such as online learning and planning. In this paper we extend the formal meta-theory to provide the flexibility required to capture a wider variety of deliberative behaviour; from planning through to online learning of policies and transition models. Furthermore our new framework allows for a greater flexibility in propagating the utilities of actions between levels of a cognitive hierarchy.

The rest of the paper proceeds as follows. We first provide a brief overview of the notion of a cognitive hierarchy and the underlying intuitions necessary to understand the formal framework. We then present a motivating example to serve as a running example that shows how symbolic planning and reinforcement learning can be integrated into a cognitive hierarchy. Next, the formal framework will be presented and properties shown, highlighting how our extension differs from the original. We will then finish with a discussion of the formal developments and some complexity considerations, with reference to the motivating example.


\section{Cognitive Hierarchy Overview}
\label{secCogHiOverview}

We start here with a brief overview of some of the underlying ideas and intuitions of the theory. A cognitive hierarchy (Figure~\ref{figCogHiOverview}) consists of a set of nodes, where a node has two primary functions: world-modelling and behaviour-generation. World-modelling involves maintaining a \emph{belief state}, while behaviour-generation is achieved through \emph{policies}, where a policy maps states to sets of actions.

\begin{figure}[h]
  \centering
    \includegraphics[width=0.40\textwidth]{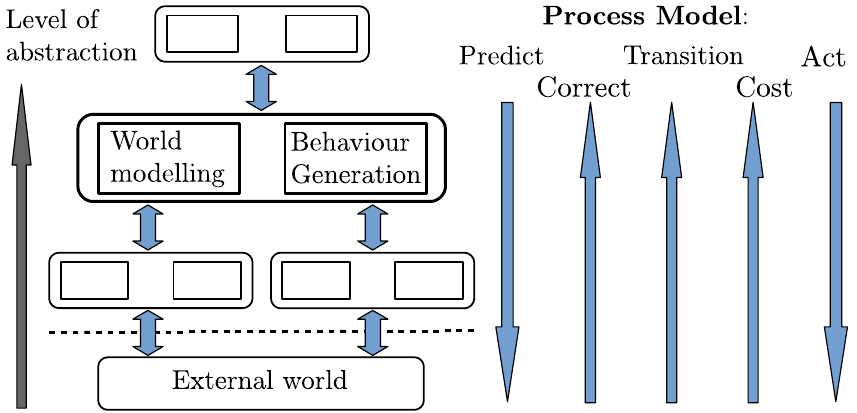}
    \caption{Overview of a Cognitive Hierarchy}
  \label{figCogHiOverview}
\end{figure}

The belief state of a high-level node represents an abstraction of the belief state of lower-level nodes, ultimately abstracting from the physical environment. These belief states are updated through sensing. Similarly, actions at a high-level successively reduce to low-level actions that drive physical actuators. The external world itself can be modelled simply as a lowest-level node, albeit a node with opaque internal behaviour that is accessed only through sensing and actions.

A \emph{process model} is required to specify how a cognitive hierarchy is updated. In the model by~\citeauthor{clark2016framework}~(\citeyear{clark2016framework}) a simple two step process was defined, consisting of a sensing pass followed by an action generation pass. Unfortunately, such a model is no longer adequate for scenarios that incorporate online learning and planning. Consequently, we introduce a multi-stage process model (Figure~\ref{figCogHiOverview}) involving a number of distinct passes up and down the cognitive hierarchy .


First, a \emph{prediction update} is performed down the hierarchy, updating how the robot predicts the external world will change. Next a \emph{correction update} progressively corrects the predicted belief states based on actual observations of the external world. Thirdly, comparing the change between observations, a \emph{transition function update} is performed to learn or refine each node's transition model. Fourthly, a \emph{utility update} is performed whereby costs of lower-level actions are passed up for use by higher-level action planners/learners. Finally, the \emph{action update} phase selects policies which results in action selection down the hierarchy.


\section{Motivating Scenario}
\label{s:example}
As an explanatory aid to formalising the integration of planning and learning in a cognitive hierarchy we consider the task of a robot that needs to reach a goal by navigating between rooms (Figure~\ref{figMotivatingExample} (a)). While we allow for stochasticity of robot motion, for example due to slipping of the wheels, for the simplicity of presentation we assume that the robot is able to accurately sense its location within the environment.

\begin{figure}[h]
  \centering
  \begin{minipage}[b]{0.34\linewidth}
    \includegraphics[width=1.0\textwidth]{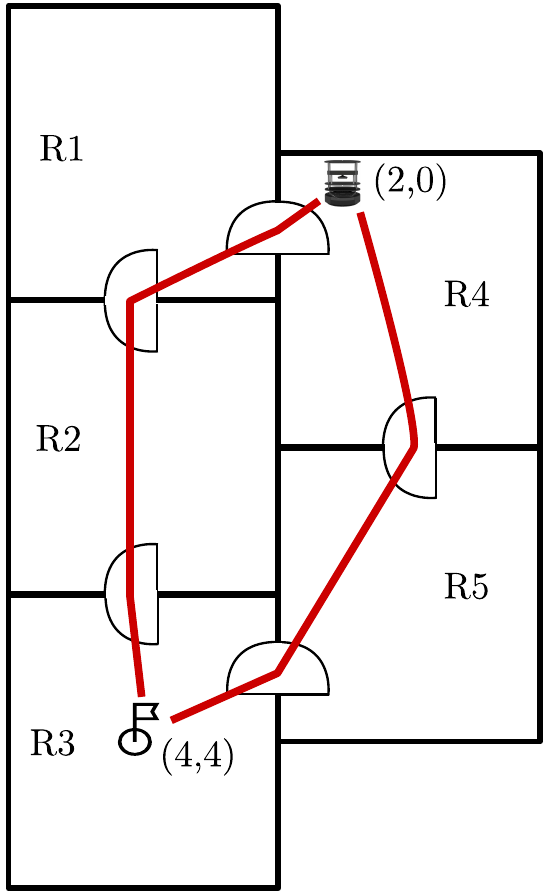}
    \centering
    (a)
  \end{minipage}
  \hspace{0.005\linewidth}
  \begin{minipage}[b]{0.38\linewidth}
    \includegraphics[width=1.0\textwidth]{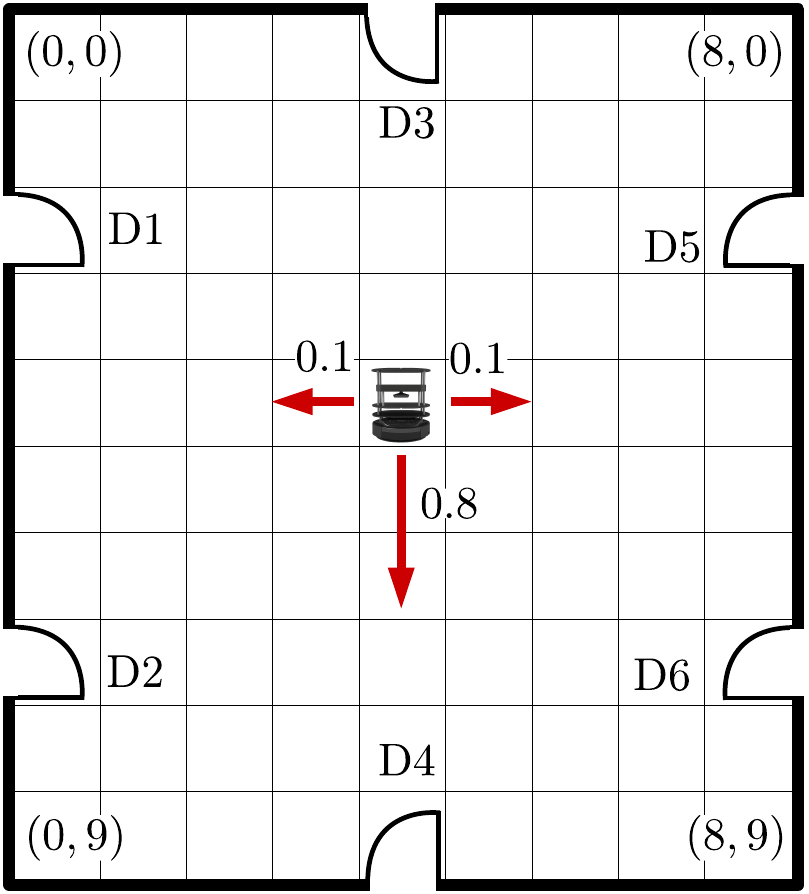}
    \centering
    (b)
  \end{minipage}
  \hspace{0.005\linewidth}
  \begin{minipage}[b]{0.22\linewidth}
    \includegraphics[width=1.0\textwidth]{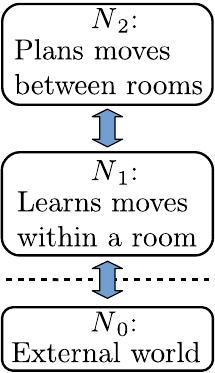}
    \centering
    (c)
  \end{minipage}

  \caption{A model for robot navigation with stochastic robot motion. Rooms are identical except that doorways may differ for individual rooms. The robot and goal are located in rooms R4 and R3 respectively.}
  \label{figMotivatingExample}
\end{figure}

This problem can be modelled as a three level cognitive hierarchy (Figure~\ref{figMotivatingExample} (c)); a lowest-level node representing the external world, a lower-level node for learning motion control and costs, and a higher-level node for planning abstract actions to move between rooms. At the planning level there are two (loop free) paths for the robot to reach the goal: R4-R5-R3 and R4-R1-R2-R3. From a global perspective path R4-R5-R3 traverses fewer doorways while path R4-R1-R2-R3 is a shorter distance. However, determining which of these is optimal requires accounting for the stochastic motion of the robot. For example, if high-precision movements are difficult then it may be preferable to minimise the doorways traversed rather than minimising the distance travelled.

To highlight the learning of the motion model we simplify the scenario to ensure that each room is identical except for the presence of the key features of the doorways and the goal location (Figure~\ref{figMotivatingExample} (b)). Consequently, at the lower-level the system learns both a transition model as well as an action model for moving to the different features within a room.

A key contribution of this paper is to show how different robotic and AI techniques can be combined in a principled manner within a single robotic system. The example illustrates this point by constructing a simple integrated system where a high-level symbolic planner is able to determine optimal paths using cost information extracted from a low-level transition model learnt through reinforcement learning.


\section{Cognitive Hierarchy Meta-Theory}

For the sake of brevity the following presentation both summarises and extends the formalisation of cognitive hierarchies as introduced in~\citeauthor{clark2016framework}~(\citeyear{clark2016framework}), and extended in~\citeauthor{hengst2018context}~(\citeyear{hengst2018context}). We shall, however, highlight how our contribution differs from these earlier works. The essence of this framework is to adopt a meta-theoretic approach, formalising the interaction between abstract cognitive nodes, while making no commitments about the representation and reasoning mechanism within individual nodes.


\subsection{Nodes}
Nodes in a cognitive hierarchy achieve goals through maintaining a belief state and a policy. The belief state can be modified through two distinct mechanisms: by sensing lower-level nodes (\emph{correction update}) or by the combination of actions and higher-level context (\emph{prediction update}). The prediction update involves applying a transition model of how the world changes, which itself can be learnt. As well as modelling observed changes in the world a robot also takes actions. Actions are selected by applying the belief state to a policy. The appropriate policy is determined by a planning (or learning) operator that selects a policy based on the transition model, action costs, and the goals to be achieved.

\begin{definition}
\label{d:cnode}
A cognitive language is a tuple $\cnL =(\cnSs, \cnAs, \cnTs, \cnOs, \cnCs, \cnPSs, \cnUs)$, where $\cnSs$ is a set of belief states, $\cnAs$ is a set of actions, $\cnTs$ is a set of task parameters, $\cnOs$ is a set of observations, and $\cnCs$ is a set of contextual elements, $\cnPSs$ is a set of internal planning states, and $\cnUs$ is a set of utilities.

A cognitive node is a tuple $\cnN = (\cnL, \cnPs, \cnPUOs, \cnPO, \cnTLO,  \cnOUO, \cnUUF, \cnPUO^0, \cnP^0, \cnS^0, \cnPS^0)$ s.t:
\begin{itemize}
\item $\cnL$ is the cognitive language for $\cnN$, with initial belief state $\cnS^0\in\cnSs$, and initial planning state $\cnPS^0\in\cnPSs$.

\item An observation update operator $\cnOUO: \cnSs \times 2^{\smallmath{\cnOs}} \rightarrow \cnSs$.

\item $\cnUUF$ is a utility update function such that $\cnUUF : \cnPSs  \times 2^{\smallmath{\cnUs}}\rightarrow \cnPSs$.

\item $\cnPUOs$ is a set of transition functions such that for all $\cnPUO \in \cnPUOs$, $\cnPUO: \cnSs \times 2^{\smallmath{\cnCs}} \times 2^{\smallmath{\cnAs}} \rightarrow \cnSs$ with initial transition function $\cnPUO^0 \in \cnPUOs$.

\item A transition learning operator\\$\cnTLO: \cnPUOs \times \cnSs \times 2^{\smallmath{\cnCs}} \times 2^{\smallmath{\cnAs}} \times \cnSs \rightarrow \cnPUOs$.

\item $\cnPs$ is a set of policies such that for all $\cnP \in \cnPs$, $\cnP : \cnSs \rightarrow 2^{\smallmath{\cnAs}}$ and initial policy $\cnP^0 \in \cnPs$.

\item A planning operator $\cnPO\!:\!\cnPs\!\times\!\cnPSs\!\times\!\cnPUOs\!\times\!2^{\smallmath{\cnTs}}\!\times\!\cnSs \rightarrow \cnPs\!\times\!\cnPSs$.

\end{itemize}
\end{definition}

Definition~\ref{d:cnode} differs from the original in two important respects. First, it allows for learning of the transition function through the use of a \emph{transition learning operator}. Secondly, it allows for planning (or learning of an action policy) through the use of a richer function for policy selection.

An \emph{internal planning state} provides a mechanism to model complex planning behaviour operating over multiple calls to the planning function. For example, for a given belief state, the planning function might initially return a do-nothing action, when the underlying planner has not had time to find a plan, but return a plan on subsequent calls. The planning operator also incorporates utilities, where determining the costs of individual actions requires knowing the cost of executing the underlying low-level actions. A \emph{utility update function} incorporates these costs into the planner's internal state.



The motivating example can now be instantiated with low and high level nodes. The low-level node is encoded as a simple Q-learner~\cite{Watkins1992} that learns the transition function and policy to navigate within a room.

\newcommand{\exNorth}{\mbox{${N}$}}
\newcommand{\exSouth}{\mbox{${S}$}}
\newcommand{\exWest}{\mbox{${W}$}}
\newcommand{\exEast}{\mbox{${E}$}}

\newcommand{\exLogicText}[1]{\mbox{\text{\ttfamily\footnotesize\upshape{#1}}}}

\newcommand{\exLocation}{\exLogicText{at}}
\newcommand{\exGoal}{\exLogicText{goal}}
\newcommand{\exUnknown}{\exLogicText{unkn}}
\newcommand{\exDoor}[1]{\exLogicText{d#1}}
\newcommand{\exRoom}[1]{\exLogicText{r#1}}
\newcommand{\exMoveG}{\exLogicText{mv\_goal}}
\newcommand{\exMoveR}{\exLogicText{trv}}
\newcommand{\exCTF}{\exLogicText{ctf}}
\newcommand{\exCBF}{\exLogicText{cbf}}

\begin{example*}
 For the sub-symbolic node $\cnN_1$, the belief state $\cnSs_1$ consist of the robot's room location $R_1=\{\exRoom{1},\ldots,\exRoom{5}\}$ and grid $G_1=\{0,\ldots,9\} \times \{0,\ldots,10\}$, i.e. $\cnSs_1 = R_1 \times G_1$. The model of a robot with perfect sensing simplifies the set of observations, so $\cnOs_1=\cnSs_1$, and the observation update operator $\cnOUO$ simply replaces the current belief state with the observation.

 The robot can perform four distinct actions $\cnAs_1=\{\exNorth, \exSouth, \exEast, \exWest\}$ intending to move one step north, south, east, and west respectively. The task parameters $\cnTs_1=\{\exDoor{1},\ldots,\exDoor{6},\exGoal\}$ encode the intra-room navigation destinations, in particular the goal and the six numbered doors~(Figure~\ref{figMotivatingExample}). We do not require context from the higher-level node so $\cnCs_1=\{\}$, and since the lower level node $\cnN_0$ is the external world there are no utilities being supplied from below so $\cnUs_1=\{\}$. Wheel slippage is simulated by $\cnN_0$ moving the robot $80\%$ in the intended direction and $20\%$ randomly to either side. 
 
Since all rooms are similar, the $\cnN_1$ transition and planning functions use a projected state space $\{\langle\cdot,g_1\rangle | g_1 \in G_1\}$ represented by just grid locations for a typical relativised room. The transition function $\cnPUO_1$ keeps a tally of how many times a successor state is reached from a previous state and action. The planning state $\cnPS_1$ comprises the reinforcement learning $Q$-values, one set for each task parameter. The planning operator $\cnPO_1$ uses the latest transition function to update the $Q$-values in a value iteration process known as temporal difference learning. Each primitive action is assumed to incur a unit cost. The greedy policy $\cnP_1(\langle\cdot,g_1\rangle)=argmax_a Q$-value$(\langle\cdot,g_1\rangle,\;a,\;t_1)$, $t_1\in \cnTs_1$.
\end{example*}

In contrast to the low-level's reinforcement learner, the high-level node is instantiated with a symbolic planner. We use Answer Set Programming (ASP)~\cite{gelfon:answer} to represent the planning problem, and generate solutions using the ASP solver Clingo~\cite{gebser:potass}.
The solver returns models that represent solutions to the planning problem.

\begin{example*}
  The belief state of the symbolic node $\cnN_2$, consists of the robot's room location and whether it is at a specific feature; let $F=\{\exDoor{1},\ldots,\exDoor{6},\exGoal\}$ then $\cnSs_2=\{\exLocation(r, f) \mid r\in\{\exRoom{1},\ldots,\exRoom{5}\}, f \in F \cup \{\exUnknown\}\}$, where $\exUnknown$ denotes that the robot is not at a known feature. Observations simply replace the belief state so $\cnOs_2=\cnSs_2$. The actions consist of either traversing a doorway in a room or moving to the goal location, $\cnAs_2=\{\exMoveR(d) \mid d \in\{\exDoor{1},\ldots,\exDoor{6}\}\} \cup \{\exMoveG\}$.


The rooms and doorways are fixed so, in the following ASP fragment, we consider a fixed transition function $\cnPUO^0_2$ to encode the movements between rooms and the goal (Figure~\ref{figMotivatingExample}):

\begin{lstlisting}[style=asp,numbers=none]
  time(0..10).                 % 10 step time horizon
  goal_in(r3).                 % goal is in R3
  conn(r1,d4,r2,d3).           % R1-D4 connects to R2-D3
  conn(r1,d6,r4,d1). ... conn(r5,d2,r3,d5).

  % precondition axioms
  poss(mv_goal,T) :- at(R,L,T), L != goal, goal_in(R).
  poss(trv(D1)),T) :- at(R1,_,T), conn(R1,D1,R2,D2).

  % successor state axioms
  at(R,goal,T+1) :- robot_at(R,_,T), do(mv_goal,T).
  at(R2,D2,T+1) :-
      do(trv(D1),T), in(R1,_,T), conn(R1,D1,R2,D2).
\end{lstlisting}

A plan requires a sequence of actions that reaches the goal:

\begin{lstlisting}[style=asp,numbers=none]
  1 { do(A,T):poss(A,T)} 1 :- time(T), not done(T).
  done(T) :- at(_,goal,T).
  done(T+1) :- done(T), time(T).
  done :- done(T).
  :- not done.
\end{lstlisting}

Plan generation needs to take into account action costs. The utility for the node consists of a pair of functions; the first returns the cost of navigating from the robot's current location in a room to features in that room, and the second returns the cost of navigating between features within a room; $\cnUs_2=\{\langle \exCTF : F \rightarrow \mathbb{I}, \exCBF : F \times F \rightarrow \mathbb{I}\rangle\}$. The sum of the cost of each action can then be minimised:

\begin{lstlisting}[style=asp,numbers=none]
  cost(trv(D),T,C) :- at(R,unkn,T), do(trv(D),T), ctf(D,C).
  cost(mv_goal,T,C) :-
      at(R,unkn,T), do(mv_goal,T), ctf(goal,C).
  cost(traverse(D),T,C) :-
      at(R,F,T), F != unkn, do(trv(D),T), cbf(F,D,C).
  cost(mv_goal,T,C) :-
      at(R,F,T), F != unkn, do(mv_goal,T), cbf(F,goal,C).

  #minimize { C,A,T : cost(A,T,C) }.  % minimise plan cost
\end{lstlisting}


\end{example*}


\subsection{Cognitive Hierarchy}

Nodes are interlinked in a hierarchy, where sensing data and utilities are passed up the \emph{abstraction hierarchy}, while actions and context are passed down the hierarchy (Figure~\ref{figNodeInteractions}).

\begin{figure}[tb]
	\centering
	\includegraphics[width=0.45\textwidth]{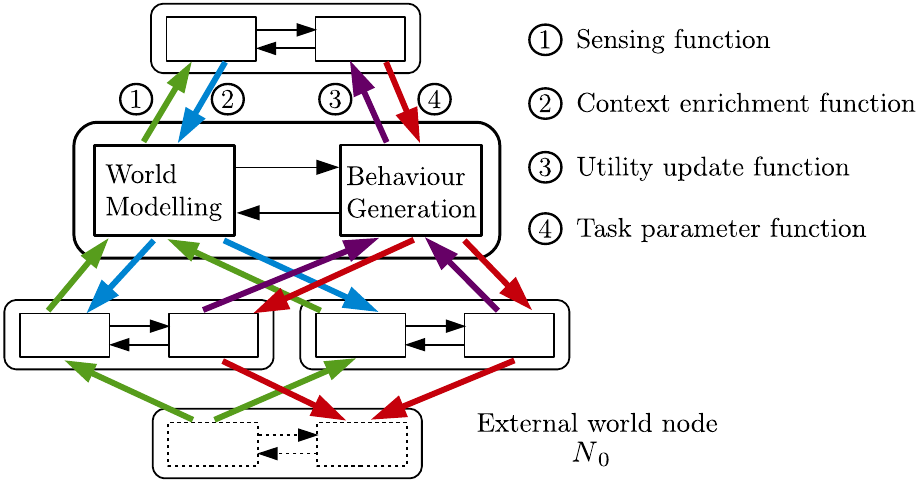}
	\caption{A cognitive hierarchy, highlighting the flow of sensing, utilities, action, and context data. Except for sensing and acting the behaviour of $\cnN_0$ is opaque.}
        \label{figNodeInteractions}
\end{figure}

\begin{definition}
\label{d:chierarchy}
A cognitive hierarchy is a tuple $\cnH = (\cnNs, \cnN_0, \cnFPs)$ s.t:
\begin{itemize}
\item $\cnNs$ is a set of cognitive nodes and $\cnN_0\in \cnNs$ is a distinguished node corresponding to the external world.
\item $\cnFPs$ is a set of function tuples $\langle \cnSF_{i,j}, \cnCF_{j,i}, \cnUTF_{i,j}, \cnTF_{j,i} \rangle \in \cnFPs$ that connect nodes $\cnN_i,\cnN_j\in\cnNs$ where:
  \begin{itemize}
    \item $\cnSF_{i,j}: \cnSs_i \rightarrow 2^{\smallmath{\cnOs_j}}$ is a sensing function, and
    \item $\cnCF_{j,i}: \cnSs_j \rightarrow 2^{\smallmath{\cnCs_i}}$ is a context enrichment function.
    \item $\cnUTF_{i,j}: \cnPSs_i \rightarrow 2^{\smallmath{\cnUs_j}}$ is a utility enrichment function.
    \item $\cnTF_{j,i}: 2^{\smallmath{\cnAs_j}} \rightarrow 2^{\smallmath{\cnTs_i}}$ is a task parameter function.
  \end{itemize}
\item Upward graph: each $\cnSF_{i,j}$ represents an edge from node $\cnN_i$ to $\cnN_j$ and forms a directed acyclic graph (DAG) with $\cnN_0$ as the unique source node of the graph.
\item Downward graph: the set of task parameter functions forms a converse to the upward graph such that $\cnN_0$ is the unique sink node of the graph.
\end{itemize}
\end{definition}

Definition~\ref{d:chierarchy} differs from~\citeauthor{hengst2018context}~(\citeyear{hengst2018context}) with the introduction of the \emph{utility enrichment} functions and the naming of the graphs (previously sensing and prediction graphs).
These functions provide the connection between nodes. For example the sensing function extracts observations from the belief state of a lower-level node, while the utility function allows the cost of actions from a lower-level node to be passed to the planner of a higher-level node.

The running example can now be encoded formally as a cognitive hierarchy, again with the following showing only the salient features of the encoding.

\begin{example*}
  Let $\cnH=(\cnNs,\cnN_0,\cnFPs)$ where $\cnNs = \{\cnN_0, \cnN_1, \cnN_2\}$ and $\cnFPs=\{\langle\cnSF_{0,1},\cnCF_{1,0},\cnUTF_{0,1},\cnTF_{1,0}\rangle, \langle\cnSF_{1,2},\cnCF_{2,1},\cnUTF_{1,2},\cnTF_{2,1}\rangle\}$ s.t.:\\
  \begin{array}[t]{l@{}l@{}l@{}l}
    &\multicolumn{3}{l}{\cnSF_{0,1}(\cnS_0 \in \cnSs_0) = \{\langle r,x,y\rangle\}}\text{, for robot location $\langle r,x,y\rangle$.}\\
    ~~&~~\cnSF_{1,2}:&~& \langle r,x,y\rangle \mapsto \{\langle r,f \rangle\}\text{, where $f$ is the feature at}\\
    &&\multicolumn{2}{l}{\text{location $x,y$, or $\exUnknown$ if there is no such feature.}}\\

    &\multicolumn{3}{l}{\cnUTF_{1,2}(\cnPS\in\cnPSs) = \{\langle \exCTF,\exCBF \rangle\}\text{, where $\exCTF$ and $\exCBF$ are}}\\
    &&\multicolumn{2}{l}{\text{generated from the Q-values of node $\cnN_1$.}}\\

    ~~&~~\cnTF_{2,1}:&~& \{\exMoveR({d})\} \mapsto \{d\}\text{, where $d$ is the given doorway.}\\
    &            & & \{\exMoveG\} \mapsto \{\exGoal\}\\

    ~~&~~\cnTF_{1,0}:&~& \{\exNorth\} \mapsto \{\langle 0,-1\rangle\}, \{\exSouth\} \mapsto \{\langle 0,1\rangle\}\\
    &            &~& \{\exWest\} \mapsto \{\langle -1,0\rangle\}, \{\exEast\} \mapsto \{\langle 1,0\rangle\}\text{, where}\\
    &&\multicolumn{2}{l}{\text{$\langle \{-1,0,1\},\{-1,0,1\}\rangle$ can be understood as}}\\
    &&\multicolumn{2}{l}{\text{signals to drive a robot base.}}\\

    &\multicolumn{3}{l}{\text{For $\cnCF_{1,0},\cnCF_{2,1},\cnUTF_{0,1}$ the functions return the empty set.}}
  \end{array}
\end{example*}

\subsection{Active Cognitive Hierarchy}

In order to formalise the behaviour of an operational cognitive system, additional definitions are required. The following are modified from the original in order to add support for learning and the requirements of a more complex process model.

\begin{definition}
\label{d:activenode}
An active cognitive node is a tuple $\cnAN = (\cnN, \cnPUO, \cnP, \cnCS, \cnPUS, \cnCUS, \cnPS)$ where: 1) $\cnN$ is a cognitive node with $\cnSs$, $\cnPs$, $\cnPUOs$, and $\cnPSs$ being its set of belief states, set of policies, set of transition functions, and internal planning states respectively, 2) $\cnCS, \cnPUS, \cnCUS \in \cnSs$ is, respectively, the current belief states, and two intermediate belief states, 3) $\cnPUO \in \cnPUOs$ and $\cnP \in \cnPs$ is the current transition function and policy respectively, and $\cnPS \in \cnPSs$ is the current internal planning state.
\end{definition}

Essentially an active cognitive node couples a (static) cognitive node with some dynamic information. A similar coupling is also performed for the hierarchy itself.

\begin{definition}
\label{d:activehierarchy}
An active cognitive hierarchy is a tuple $ \cnAH = (\cnH, \cnANs)$ where $\cnH$ is a cognitive hierarchy with set of cognitive nodes $\cnNs$ such that for each $\cnN \in \cnNs$ there is a corresponding active cognitive node $\cnAN=(\cnN, \cnPUO, \cnP, \cnCS, \cnPUS, \cnCUS, \cnPS) \in \cnANs$ and vice-versa.
\end{definition}

The active cognitive hierarchy captures the dynamic state of a system at a particular instance in time. Finally, an \emph{initial active cognitive hierarchy} is an active hierarchy where each node is initialised with the initial transition function $\cnPUO^0$, policy $\cnP^0$, and belief state $\cnS^0$ (for $\cnCS$, $\cnPUS$ and $\cnCUS$) of the corresponding cognitive node.

\subsection{Cognitive Process Model}
\label{s:processmodel}

The \emph{process model} defines how an active cognitive hierarchy evolves over time. The process model from~\citeauthor{hengst2018context}~(\citeyear{hengst2018context}) consists of two steps, a sensing update followed by an action update. Unfortunately this model is no longer adequate for the requirements of learning and planning, and we now formally define the process model consisting of the five distinction passes outlined in Section~\ref{secCogHiOverview}.

To define the process model we first define the updating of a single node, and then define the updating of the entire hierarchy. The intuition behind a node update function is that given a cognitive hierarchy and a node, the function returns an identical hierarchy except for the updated node. We define five functions to match the five passes of the hierarchy.

\begin{definition}
\label{d:predictionUpdatePre}
Let $\cnAH\!=\!(\cnH, \cnANs)$ be an active cognitive hierarchy with $\cnH\!=\!(\cnNs, \cnN_0, \cnFPs)$. The prediction update of $\cnAH$ with respect to an active cognitive node $\cnAN_i\!=\!(\cnN_i, \cnPUO_i, \cnP_i, \cnCS_i, \cnPUS_i, \cnCUS_i, \cnPS_i) \in \cnANs$, written as $\cnPredUpdate(\cnAH,\cnAN_i)$ is $\cnAH$ if $i=0$, else is an active cognitive hierarchy $\cnAH' = (\cnH, \cnANs')$ where
$\cnANs' = \cnANs\!\setminus\!\{\cnAN_i\} \cup \{\cnAN_i'\}$ and $\cnAN'_i = (\cnN_i,  \cnPUO_i, \cnP_i, \cnCS_i, \cnS, \cnS, \cnPS_i)$ s.t:\\
$\begin{array}[t]{l@{}l@{}l@{}l@{}l}
   &    \cnS &~=~& \multicolumn{2}{l}{ \cnPUO_i(\cnCS_i, C, \cnP_i(\cnCS_i)),}\\
   &       C &~=~& \bigcup~\{ & \cnCF_{x,i}(\cnPUS_x)~|~\langle \cnSF_{i,x}, \cnCF_{x,i}, \cnUTF_{i,x}, \cnTF_{x,i}  \rangle\in\cnFPs \mbox{~for~}\\
   &         &   & & ~\cnAN_x =(\cnN_x, \cnPUO_x, \cnP_x, \cnCS_x, \cnPUS_x, \cnCUS_x, \cnPS_x) \in \cnANs\}\\
\end{array}$
\end{definition}

Definition~\ref{d:predictionUpdatePre} performs a prediction update on a node, applying the transition function to the current belief state, the context, and actions. Note, since we have no access to the external world other than sensing and undertaking physical actions, the update function produces no change if the selected node is $\cnN_0$. This will similarly be the case for the transition learning, and utility update functions.

\begin{definition}
\label{d:correctionUpdatePre}
Let $\cnAH\!=\!(\cnH, \cnANs)$ be an active cognitive hierarchy with $\cnH\!=\!(\cnNs, \cnN_0, \cnFPs)$. The correction update of $\cnAH$ with respect to an active cognitive node $\cnAN_i\!=\!(\cnN_i,  \cnPUO_i, \cnP_i, \cnCS_i, \cnPUS_x, \cnCUS_x, \cnPS_i) \in \cnANs$, written as $\cnCorrUpdate(\cnAH,\cnAN_i)$ is:
\begin{itemize}
\item $\cnAH$, if $i = 0$ or there does not exist a node $\cnN_x$ such that $\langle \cnSF_{x,i}, \cnCF_{i,x}, \cnUTF_{x,i}, \cnTF_{i,x} \rangle\!\in\! \cnFPs$
\item an active cognitive hierarchy $\cnAH' = (\cnH, \cnANs')$ where\\ $\cnANs'=\cnANs\setminus\{\cnAN_i\}
\cup \{\cnAN_i'\}$ and\\ $\cnAN'_i = (\cnN_i,  \cnPUO_i, \cnP_i, \cnCS_i, \cnPUS_x, \cnOUO_i(O, \cnCUS_i), \cnPS_i)$ s.t:\\
$\begin{array}[t]{l@{}l@{}l@{}l@{}l}
  ~~~&  O =~& \bigcup  \{ & \cnSF_{x,i}(\cnCUS_x)~|~\langle \cnSF_{x,i}, \cnCF_{i,x}, \cnUTF_{x,i}, \cnTF_{i,x} \rangle\!\in\! \cnFPs \mbox{~for~}\\
   &      &             & ~\cnAN_x =(\cnN_x, \cnPUO_i, \cnP_i, \cnCS_x, \cnPUS_x, \cnCUS, \cnPS_x) \in \cnANs\}\\
\end{array}$
\end{itemize}
\end{definition}

Definition~\ref{d:correctionUpdatePre} establishes how to update the current belief state of a node based on sensing observations.

\begin{definition}
\label{d:transFuncUpdatePre}
Let $\cnAH\!=\!(\cnH, \cnANs)$ be an active cognitive hierarchy with $\cnH\!=\!(\cnNs, \cnN_0, \cnFPs)$. The update of the transition function of $\cnAH$ with respect to an active cognitive node $\cnAN_i\!=\!(\cnN_i,  \cnPUO_i, \cnP_i, \cnCS_i, \cnPUS_i, \cnCUS_i, \cnPS_i) \in \cnANs$, written as $\cnTransFuncUpdate(\cnAH,\cnAN_i)$ is $\cnAH$, if $i = 0$ else is an active cognitive hierarchy $\cnAH' = (\cnH, \cnANs')$ where $\cnANs' = \cnANs\!\setminus\!\{\cnAN_i\}
\cup \{\cnAN_i'\}$ and $\cnAN'_i = (\cnN_i,  \cnTLO(\cnPUO_i, \cnCS_i, C, \cnP_i(\cnCS_i), \cnCUS_i), \cnP_i, \cnCS_i, \cnPUS_i, \cnCUS_i, \cnPS_i)$ s.t:\\
$\begin{array}[t]{l@{}l@{}l@{}l@{}l}
   &       C &~=~ \bigcup  \{ & \cnCF_{x,i}(\cnPUS_x)~|~\langle \cnSF_{i,x}, \cnCF_{x,i}, \cnUTF_{i,x},\cnTF_{x,i} \rangle\in\cnFPs \mbox{~for~}\\
   &         &             & ~\cnAN_x =(\cnN_x, \cnPUO_x, \cnP_x, \cnCS_x, \cnPUS_x, \cnCUS_x, \cnPS_x) \in \cnANs\}\\
\end{array}$
\end{definition}

Definition~\ref{d:transFuncUpdatePre} establishes the updating of the state transition function for the node. This learning phase selects a new state transition function based on the previous transition function, as well as the previous and currently observed states and the actions (and context) that lead to the new observed state.

\begin{definition}
\label{d:utilityUpdatePre}
Let $\cnAH\!=\!(\cnH, \cnANs)$ be an active cognitive hierarchy with $\cnH\!=\!(\cnNs, \cnN_0, \cnFPs)$. The utility update of $\cnAH$ with respect to an active cognitive node $\cnAN_i\!=\!(\cnN_i,  \cnPUO_i, \cnP_i, \cnCS_i, \cnPUS_i, \cnCUS_i, \cnPS_i) \in \cnANs$, written as $\cnUtilUpdate(\cnAH,\cnAN_i)$ is $\cnAH$, if $i = 0$ else is an active cognitive hierarchy $\cnAH' = (\cnH, \cnANs')$ where $\cnANs' = \cnANs\!\setminus\!\{\cnAN_i\}
\cup \{\cnAN_i'\}$ and $\cnAN'_i = (\cnN_i, \cnPUO_i, \cnP_i, \cnCS_i, \cnPUS_i, \cnCUS_i, \cnUUF_i(\cnPS_i, U))$ s.t:\\
$\begin{array}[t]{l@{}l@{}l@{}l@{}l}
   &       U &~=~ \bigcup  \{ & \cnUTF_{i,x}(\cnPS_x)~|~\langle \cnSF_{i,x}, \cnCF_{x,i}, \cnUTF_{i,x},\cnTF_{x,i}  \rangle\in\cnFPs \mbox{~for~}\\
   &         &             & ~\cnAN_x =(\cnN_x, \cnPUO_x, \cnP_x, \cnCS_x, \cnPUS_x, \cnCUS_x, \cnPS_x) \in \cnANs\}\\
\end{array}$
\end{definition}

The utility update function (Definition~\ref{d:utilityUpdatePre}) updates the internal planning state of a node with the utilities derived from the planning states of its children.

\begin{definition}
\label{d:actionUpdatePre}
Let $\cnAH\!=\!(\cnH, \cnANs)$ be an active cognitive hierarchy with $\cnH\!=\!(\cnNs, \cnN_0, \cnFPs)$. The action update of $\cnAH$ with respect to an active cognitive node $\cnAN_i\!=\!(\cnN_i, \cnPUO_i, \cnP_i, \cnCS_i, \cnPUS_i, \cnCUS_i, \cnPS_i) \in \cnANs$, written as $\cnActionUpdate(\cnAH,\cnAN_i)$ is an active cognitive hierarchy $\cnAH' = (\cnH, \cnANs')$ where
$\cnANs' = \cnANs\!\setminus\!\{\cnAN_i\} \cup \{\cnAN_i'\}$ and $\cnAN'_i = (\cnN_i,  \cnPUO_i, \cnP'_i, \cnCUS_i, \cnPUS_i, \cnCUS_i, \cnPS'_i)$ s.t:\\
$\begin{array}[t]{l@{}l@{}l@{}l@{}l}
  \multicolumn{4}{l}{\langle \cnP'_i, \cnPS'_i \rangle = \cnPO(\cnP_i, \cnPUO_i, T, \cnPS_i, \cnCUS_i),}\\
  T &~= \bigcup\{ & \cnTF_{x,i}(\cnP_x(\cnCS_x))|\langle \cnSF_{i,x}, \cnCF_{x,i}, \cnUTF_{i,x}, \cnTF_{x,i} \rangle\in\cnFPs \mbox{~for~}\\
    &              & ~\cnAN_x =(\cnN_x, \cnPUO_x, \cnP_x, \cnCS_x, \cnPUS_x, \cnCUS_x, \cnPS_x) \in \cnANs\}\\
\end{array}$
\end{definition}

Finally, the action update of a node encodes the process of the planner selecting a new policy. It does so based on the current policy, the current transition function, the task parameters, and the current belief state. A secondary task of this process is to ensure that the node's current belief state is updated to it's newly observed belief state.

Note, when this process is applied to the external world node, $\cnN_0$, the intuition is that the task parameters of the directly connected nodes, which were generated by applying the policy of those nodes to their respective belief states, represents the robot acting on the real world. These task parameters therefore represent physical actions, for example, applying some voltage to an actuator for some period of time.

Now, in order to formalise the updating of a complete hierarchy, we first introduce an intermediate function that successively applies a given function to a sequence of nodes.

\begin{definition}
\label{d:updatePass}
Let $\Phi$ be a function that takes as parameters an active cognitive hierarchy and an active cognitive node within that hierarchy, and returns another active cognitive hierarchy. Furthermore, let $\cnAH = (\cnH, \cnANs)$ be an active cognitive hierarchy, and let ${\vec{Q}}= [\cnAN_1,\ldots, \cnAN_n]$ consist of a sequence over all the  active cognitive nodes in $\cnANs$. Then the update pass of $\Phi$ over $\cnAH$ and $\vec{Q}$, written $\cnUpdatePass(\Phi, \cnAH, \vec{Q})$, is an active cognitive hierarchy:
\[
\cnAH' = \Phi(\ldots\Phi(\cnAH,\cnAN_1),\ldots\cnAN_n)
\]
\end{definition}

Definition~\ref{d:updatePass} provides a mechanism for applying the individual node update functions over the entire cognitive hierarchy in some order. We use this to define the individual passes as part of a complete update of the cognitive hierarchy.

\newcommand{\cnAHT}{\cnAH^{\tinymath{T}}}
\begin{definition}
\label{d:update}
Let $\cnAH = (\cnH, \cnANs)$ be an active cognitive hierarchy with $\Phi$ and $\Psi$ being the upward and downward graphs of $\cnH$ respectively. The process update of $\cnAH$, written $\cnUpdate(\cnAH)$, is an active cognitive hierarchy:
\[
\cnAH' = \cnActionUpdateP(\cnUtilUpdateP(\cnTransFuncUpdateP(\cnCorrUpdateP(\cnPredUpdateP(\cnAH)))))
\]
where:
\begin{itemize}
\item $\cnPredUpdateP(\cnAHT) = \cnUpdatePass(\cnPredUpdate, \cnAHT,\vec{Q}_D)$
\item $\cnCorrUpdateP(\cnAHT) = \cnUpdatePass(\cnCorrUpdate, \cnAHT,\vec{Q}_U)$
\item $\cnTransFuncUpdateP(\cnAHT) = \cnUpdatePass(\cnTransFuncUpdate, \cnAHT,\vec{Q}_U)$
\item $\cnUtilUpdateP(\cnAHT) = \cnUpdatePass(\cnUtilUpdate, \cnAHT,\vec{Q}_U)$
\item $\cnActionUpdateP(\cnAHT) = \cnUpdatePass(\cnActionUpdate, \cnAHT,\vec{Q}_D)$
\item $\vec{Q}_U$ is a sequence consisting of all active cognitive nodes in $\cnANs$ such that the sequence satisfies the partial ordering induced by the upward graph $\Phi$.
\item $\vec{Q}_D$ is a sequence consisting of all active cognitive nodes in $\cnANs$ such that the sequence satisfies the partial ordering induced by the downward graph $\Psi$.
\end{itemize}
\end{definition}

Definition~\ref{d:update} formalises the process update model outlined in Section~\ref{secCogHiOverview}. The cognitive hierarchy's upward graph induces a sequence of nodes passing up the hierarchy, while the downward graph induces a sequence passing down the hierarchy. Depending on the particular update being performed the appropriate sequence is chosen and the corresponding function is repeatedly applied.

Finally, to complete the formal developments, we establish the main result that underlines the importance of the process update model, namely that it is well-defined.

\begin{theorem}
\label{t:welldefined}
The cognitive process model $\cnUpdate$ is well-defined.
\end{theorem}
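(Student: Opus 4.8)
Unwinding Definition~\ref{d:update}, $\cnUpdate$ is a \emph{fixed} composition of five maps $\cnPredUpdateP$, $\cnCorrUpdateP$, $\cnTransFuncUpdateP$, $\cnUtilUpdateP$, $\cnActionUpdateP$, each of which is an instance of $\cnUpdatePass$ (Definition~\ref{d:updatePass}) applied to one of the node-update functions of Definitions~\ref{d:predictionUpdatePre}--\ref{d:actionUpdatePre} together with a sequence $\vec{Q}_U$ or $\vec{Q}_D$ that is required only to be \emph{some} linear extension of the partial order induced by the upward graph $\Phi$, respectively the downward graph $\Psi$. So I would split ``well-defined'' into three claims: (i) such sequences exist; (ii) each node-update function sends a well-formed active cognitive hierarchy to a well-formed active cognitive hierarchy over the \emph{same} underlying $\cnH$, so that the iterated applications in Definitions~\ref{d:updatePass} and~\ref{d:update} type-check; and (iii) the hierarchy returned by each pass does not depend on which admissible sequence is chosen. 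The plan is to dispatch (i) and (ii) quickly and put the weight on (iii).

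For (i): the upward graph is a finite DAG by Definition~\ref{d:chierarchy}, hence admits a topological sort, and every topological sort lists all nodes exactly once and is a linear extension of the induced partial order --- precisely what Definition~\ref{d:update} asks of $\vec{Q}_U$; likewise for $\Psi$ and $\vec{Q}_D$. For (ii): by inspection of Definitions~\ref{d:predictionUpdatePre}--\ref{d:actionUpdatePre}, each node-update applied to $\cnAN_i$ either returns the hierarchy unchanged (the cases $i=0$, or $\cnN_i$ having no suitably connected neighbour) or replaces just the active node $\cnAN_i$ by an $\cnAN_i'$ that keeps the cognitive node $\cnN_i$ and leaves $\cnH$ and every other active node intact; this preserves the bijection required by Definition~\ref{d:activehierarchy}, and the new entries lie in the right sets because the operators of Definition~\ref{d:cnode}, applied to arguments of the appropriate types, yield values of the appropriate types, while the auxiliary unions $C$, $O$, $U$, $T$ are unions of images of the connecting functions of $\cnFPs$ and so lie in $2^{\cnCs_i}$, $2^{\cnOs_i}$, $2^{\cnUs_i}$, $2^{\cnTs_i}$ respectively (Definition~\ref{d:chierarchy}). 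Since exactly one active node changes per step, the node argument of each successive application in Definition~\ref{d:updatePass} still designates a node of the intermediate hierarchy; and since $\cnH$ is never altered, the DAG and source/sink conditions persist throughout.

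For (iii), fix one pass together with its driving graph ($\Phi$ for $\cnCorrUpdateP$, $\cnTransFuncUpdateP$, $\cnUtilUpdateP$; $\Psi$ for $\cnPredUpdateP$, $\cnActionUpdateP$). The key structural observation is that the update of a node $\cnAN_i$ writes only to components of $\cnAN_i$ itself, while every component of \emph{another} node that it reads belongs to a node $\cnN_x$ linked to $\cnN_i$ by an edge of the driving graph --- hence \emph{comparable} to $\cnN_i$ in the induced partial order; its other inputs are the fixed functions in $\cnFPs$, which no update ever changes. Consequently, if $\cnN_i$ and $\cnN_j$ are \emph{incomparable} in the driving partial order, then the update of $i$ reads no component written by the update of $j$ and vice versa, and the two write to disjoint components; hence the two node-updates commute. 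I would then invoke the classical fact that any two linear extensions of a finite poset are joined by a finite chain of transpositions of \emph{adjacent incomparable} elements (if two linear extensions differ, some pair adjacent in one occurs in the opposite order in the other; such a pair is necessarily incomparable, so transposing it yields another linear extension that agrees with the target on one more pair). Commuting adjacent incomparable node-updates along such a chain shows that $\cnUpdatePass$ returns the same hierarchy for every admissible sequence; applying this to all five passes makes each of $\cnPredUpdateP,\ldots,\cnActionUpdateP$ a well-defined function of $\cnAH$ alone, and hence so is their composition $\cnUpdate$.

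I expect the main obstacle to be carrying out the read/write analysis of (iii) \emph{uniformly} over Definitions~\ref{d:predictionUpdatePre}--\ref{d:actionUpdatePre}, keeping the edge directions straight: $\cnPredUpdate$ and $\cnActionUpdate$ read from the nodes \emph{above} $\cnN_i$ (through the edges $\cnSF_{i,x}$), so one argues with the downward order, in which those nodes come first, whereas $\cnCorrUpdate$, $\cnTransFuncUpdate$ and $\cnUtilUpdate$ read from the nodes \emph{below} $\cnN_i$ (through the edges $\cnSF_{x,i}$); and one must notice the single wrinkle, that in the transition-function pass the neighbours read from actually \emph{follow} $\cnN_i$ in the driving order $\Phi$ --- harmless, because the only component read there, $\cnPUS$, is never modified by a transition-function update, so no genuine dependency arises and incomparable pairs still commute. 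The boundary situations ($i=0$, isolated nodes, and the fact that $\cnActionUpdate$ is, unusually, also applied at $\cnN_0$) are harmless too: there the node-update is the identity, which commutes with everything, and $\cnN_0$ already carries a full active-node structure by Definition~\ref{d:activehierarchy}.
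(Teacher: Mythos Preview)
Your proposal is correct and rests on the same structural idea the paper invokes---that each pass is well-defined because of the partial order induced by the relevant DAG---but you have actually carried out the argument that the paper only gestures at. The paper's proof is a two-line sketch: it says ``by inspection'' and asserts that each of the five passes is well-defined by an argument ``structurally identical'' to Lemma~1 of \cite{clark2016framework}, ``hinging on the partial-ordering induced by the appropriate DAG.'' Your decomposition into (i) existence of a topological sort, (ii) type-preservation of each node update, and (iii) order-independence via commutation of incomparable updates plus the adjacent-transposition connectivity of linear extensions, is exactly the content one would expect that cited lemma to contain, made self-contained. Your read/write audit of Definitions~\ref{d:predictionUpdatePre}--\ref{d:actionUpdatePre}, including the observation that the transition-function pass reads $\cnPUS$ from nodes that \emph{follow} in $\vec{Q}_U$ but never writes to $\cnPUS$, is more careful than anything the paper states explicitly; it is a genuine detail that the sketch elides.
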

\begin{proof}
By inspection on the five individual passes of cognitive hierarchy. The proof that each pass is well-defined is structurally identical to the proof for Lemma 1 from~\citeauthor{clark2016framework}~(\citeyear{clark2016framework}), hinging on the partial-ordering induced by the appropriate DAG.
\end{proof}

Theorem~\ref{t:welldefined} establishes that the process model $\cnUpdate$ does indeed produce an active cognitive hierarchy, and that there is only one possible output given the input. Consequently, the process model performs what one would intuitively require of an operational cognitive hierarchy; updating
nodes up the hierarchy and propagating actions back down the hierarchy, resulting in changes to the physical actuators of a robot.

\section{Discussion}

The previous section established the cognitive hierarchy formalism. We now turn to a discussion of its broader implications. We do so with reference to the motivating example, and its implementation within a simulation environment, to illustrates several of its salient features.

The importance of problem decomposition is well established in AI, from hierarchy learning~\cite{DBLP:conf/icml/Kaelbling93} through to factored planning~\cite{DBLP:conf/ijcai/AmirE03}. Formally, the decomposition of a node into two distinct nodes within a cognitive hierarchy can reduce an $M \times N$ search space to an $M + N$ search space~\cite{DBLP:conf/ausai/RajaratnamHPST16}.

Our example illustrates this point. The complexity of reinforcement learning in grid-worlds is $O(n^2)$, where $n$ is the size of the state-space \cite{DBLP:conf/aaai/KoenigS93}. The decomposition into rooms reduces complexity from order $n^2 = 450^2$ to $5\times 90^2+5^2$. Additionally, the similarity of the five rooms allows transfer learning of the transition and policy functions. Together the decomposition reduces the complexity of learning the optimal policy by a factor of 25. Figure \ref{figLearning} shows the faster convergence in the optimum number of steps to reach the goal for the decomposed problem.

\begin{figure}[h]
  \centering
    \includegraphics[width=0.40\textwidth]{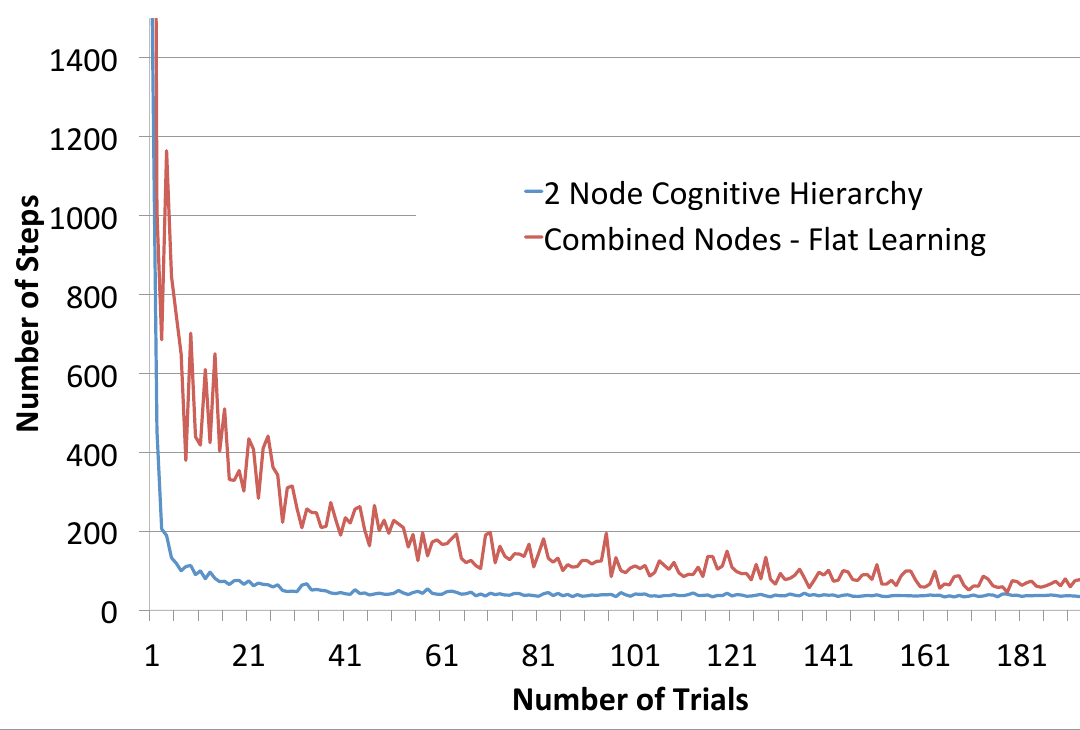}
    \caption{Learning to minimise the steps to reach the goal converges faster if the problem is decomposed (results averaged over 10 runs). }
  \label{figLearning}
\end{figure}

To ensure proper behaviour with a hierarchical decomposition it is necessary to account for action utilities. In our example, a purely abstract inter-room planner, with no access to low-level action costs, might choose to minimise the number of rooms to the goal rather than choosing the shortest path. However, the cognitive hierarchy is able to account for the local intra-room costs with the utility update and choose an optimal path that traverses three instead of two rooms (Figure \ref{figDrunkPaths}, left). Importantly, this behaviour is dependent on the stochasticity of the motion model and when the amount of slippage increases, an optimal path minimises the number of doorways as an increasing number of attempts may be necessary to move between rooms (Figure~\ref{figDrunkPaths}, right).

\begin{figure}[h]
  \centering
    \includegraphics[width=0.40\textwidth]{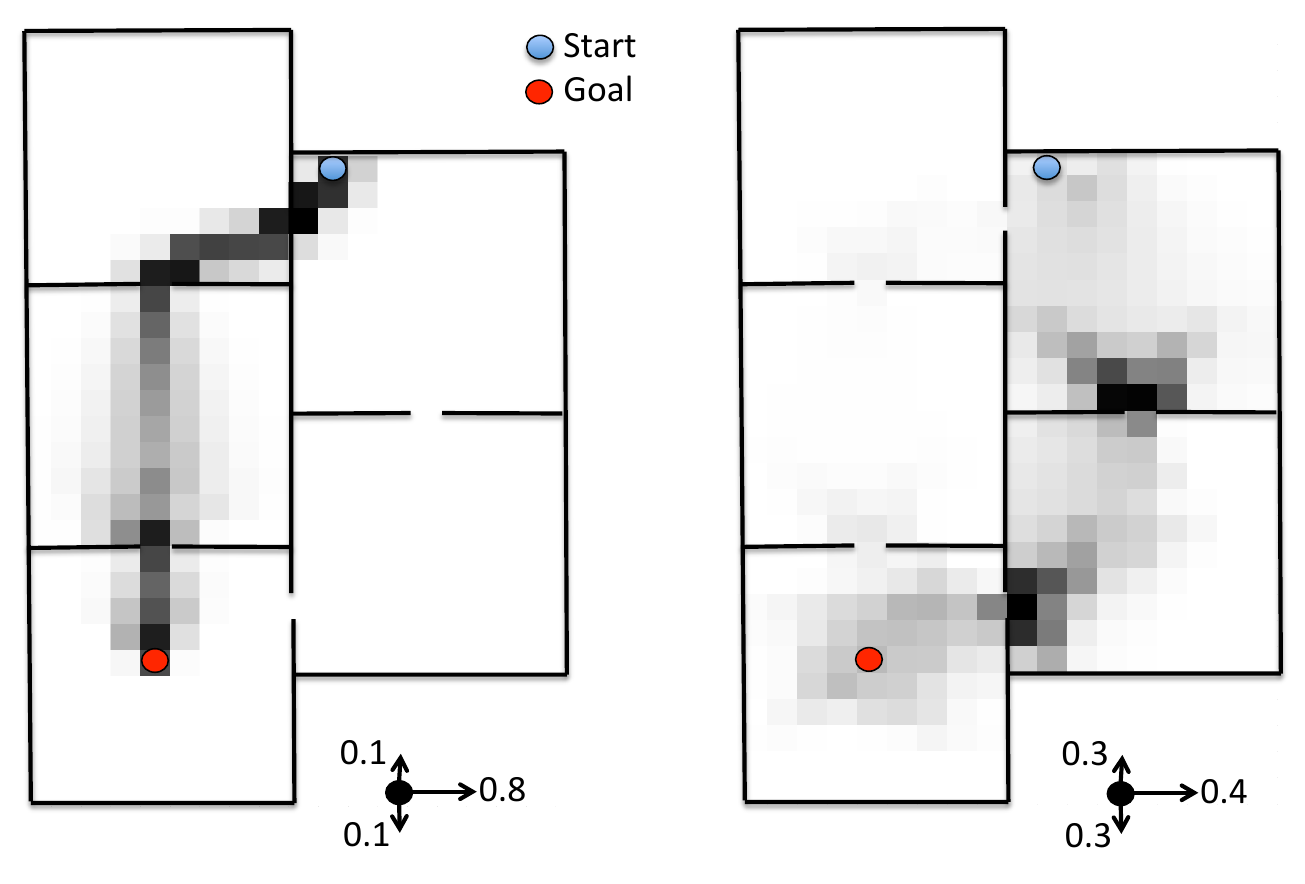}
    \caption{Location visitation count for optimal policy trials ($>$ 1000). The depth of shading indicates the frequency of visits.
      Increasing the slippage to only $40\%$ in the intended direction, and $60\%$ for either side,  changes the optimal to the longer path with fewer doorways.}
  \label{figDrunkPaths}
\end{figure}

In our motivating example the high-level planner was able to find the optimal path to the goal using the costs learnt by the low-level motion model. However, in general, hierarchical reinforcement learning cannot guarantee that a globally optimal policy will be learnt. In particular, a subtask cannot take into account the context in which it is learnt and therefore the best that can be hoped for is \emph{recursive optimality}~\cite{DBLP:journals/jair/Dietterich00}. Recursive optimality can be guaranteed in our mixed representation cognitive hierarchy for the general case, where the lower-level learns without access to the high-level context of the planner.


\section{Conclusion and Future Work}
In this paper we formalised the integration of online learning and planning, using differing representations, within a cognitive hierarchy. This formalisation takes the form of a meta-theory, whereby the interaction between nodes in a hierarchy is specified while leaving open the internal details of each node. This provides the flexibility to use arbitrary representations and reasoning mechanisms within individual nodes. The process model, that defines how to update an operational cognitive hierarchy, was also shown to be well-defined.

The key features of the extended formalisation was to show how planning and online learning can be integrated into a cognitive hierarchy. To this end a motivating example was introduced to illustrate the power and flexibility of the framework. The example established how a reinforcement learner could be used to learn both a transition model and action policy for a robot moving within a room, and how this could be combined with a symbolic planner for navigating between rooms. Costs for low-level navigation were passed to the high-level to ensure that the planner could find a lowest cost plan. 

This work opens up a number of interesting avenues for future research. In the first place it will be important to validate the framework on a variety of robot platforms and problems, from simple robots with limited capabilities through to complex robots such as those used for the Robocup@Home challenge.

\comment{
A second important avenue for future work will be to explore more deeply the interaction of learning and planning at multiple levels of the hierarchy. In the motivating example, the appropriate action utilities of the low-level node were available to the planning node. However, there are cases where only partial costs might be available at any one time. For example, it may be infeasible to generate all costs if there are a large number of features across multiple rooms.

Interestingly, our framework does provides a mechanism which could be adapted to achieve this. A process model update that ignores the external world, $\cnN_0$, could allow the robot to perform a series of what-if scenarios that did not involve sensing or acting in the external world. This would provide a form of highly deliberative ``deep'' reasoning where the robot simulates at multiple levels of abstraction before making a decision.
}

A second important avenue for future work will be to explore more deeply the interaction of learning and planning at multiple levels of the hierarchy. In some cases it may be desirable to perform a series of what-if scenarios involving multiple levels of the hierarchy. Interestingly, our framework could achieve this through a process model that ignores the external world node, $\cnN_0$. The process model would then provide a ``deep'' simulation that could be used for highly deliberative reasoning.


%
\newpage

\bibliographystyle{named}
\bibliography{paper}

\begin{thebibliography}{}

\bibitem[\protect\citeauthoryear{Albus and Meystel}{2001}]{Albus-2001}
James~S. Albus and Alexander~M. Meystel.
\newblock {\em {Engineering of Mind: An Introduction to the Science of
  Intelligent Systems}}.
\newblock Wiley-Interscience, 2001.

\bibitem[\protect\citeauthoryear{Amir and
  Engelhardt}{2003}]{DBLP:conf/ijcai/AmirE03}
E.~Amir and B.~Engelhardt.
\newblock Factored planning.
\newblock In {\em IJCAI}, pages 929--935. Morgan Kaufmann, 2003.

\bibitem[\protect\citeauthoryear{Cambon \bgroup \em et al.\egroup
  }{2009}]{DBLP:journals/ijrr/CambonAG09}
St{\'{e}}phane Cambon, Rachid Alami, and Fabien Gravot.
\newblock A hybrid approach to intricate motion, manipulation and task
  planning.
\newblock {\em I. J. Robotics Res.}, 28(1):104--126, 2009.

\bibitem[\protect\citeauthoryear{Clark \bgroup \em et al.\egroup
  }{2016}]{clark2016framework}
Keith Clark, Bernhard Hengst, Maurice Pagnucco, David Rajaratnam, Peter
  Robinson, Claude Sammut, and Michael Thielscher.
\newblock A framework for integrating symbolic and sub-symbolic
  representations.
\newblock In {\em 25th Joint Conference on Artificial Intelligence (IJCAI
  -16)}, 2016.

\bibitem[\protect\citeauthoryear{Dietterich}{2000}]{DBLP:journals/jair/Dietterich00}
Thomas~G. Dietterich.
\newblock Hierarchical reinforcement learning with the {MAXQ} value function
  decomposition.
\newblock {\em J. Artif. Intell. Res.}, 13:227--303, 2000.

\bibitem[\protect\citeauthoryear{Garrett \bgroup \em et al.\egroup
  }{2014}]{DBLP:conf/wafr/GarrettLK14}
Caelan~Reed Garrett, Tom{\'{a}}s Lozano{-}P{\'{e}}rez, and Leslie~Pack
  Kaelbling.
\newblock Ffrob: An efficient heuristic for task and motion planning.
\newblock In H.~Levent Akin, Nancy~M. Amato, Volkan Isler, and A.~Frank van~der
  Stappen, editors, {\em Algorithmic Foundations of Robotics {XI} - Selected
  Contributions of the Eleventh International Workshop on the Algorithmic
  Foundations of Robotics, {WAFR} 2014, 3-5 August 2014, Bo{\u{g}}azi{\c{c}}i
  University, {\.{I}}stanbul, Turkey}, volume 107 of {\em Springer Tracts in
  Advanced Robotics}, pages 179--195. Springer, 2014.

\bibitem[\protect\citeauthoryear{Gebser \bgroup \em et al.\egroup
  }{2011}]{gebser:potass}
Martin Gebser, Roland Kaminski, Benjamin Kaufmann, Max Ostrowski, Torsten
  Schaub, and Marius Schneider.
\newblock Potassco: The {P}otsdam answer set solving collection.
\newblock {\em AICOM}, 24, 2011.
\newblock {\tt potassco.sourceforge.net/}.

\bibitem[\protect\citeauthoryear{Gelfond}{2008}]{gelfon:answer}
Michael Gelfond.
\newblock Answer sets.
\newblock In F.\ van Harmelen, V.\ Lifschitz, and B.\ Porter, editors, {\em
  Handbook of Knowledge Representation}, pages 285--316. Elsevier, 2008.

\bibitem[\protect\citeauthoryear{Hengst \bgroup \em et al.\egroup
  }{2018}]{hengst2018context}
Bernhard Hengst, Maurice Pagnucco, David Rajaratnam, Claude Sammut, and Michael
  Thielscher.
\newblock Perceptual context in cognitive hierarchies.
\newblock {\em CoRR}, abs/1801.02270, 2018.

\bibitem[\protect\citeauthoryear{Kaelbling}{1993}]{DBLP:conf/icml/Kaelbling93}
Leslie~Pack Kaelbling.
\newblock Hierarchical learning in stochastic domains: Preliminary results.
\newblock In {\em Machine Learning, Proceedings of the Tenth International
  Conference, University of Massachusetts, Amherst, MA, USA, June 27-29, 1993},
  pages 167--173. Morgan Kaufmann, 1993.

\bibitem[\protect\citeauthoryear{Koenig and
  Simmons}{1993}]{DBLP:conf/aaai/KoenigS93}
Sven Koenig and Reid~G. Simmons.
\newblock Complexity analysis of real-time reinforcement learning.
\newblock In Richard Fikes and Wendy~G. Lehnert, editors, {\em Proceedings of
  the 11th National Conference on Artificial Intelligence. Washington, DC, USA,
  July 11-15, 1993.}, pages 99--107. {AAAI} Press / The {MIT} Press, 1993.

\bibitem[\protect\citeauthoryear{Lozano{-}P{\'{e}}rez \bgroup \em et al.\egroup
  }{1987}]{DBLP:conf/icra/Lozano-PerezJMO87}
Tom{\'{a}}s Lozano{-}P{\'{e}}rez, Joseph~L. Jones, Emmanuel Mazer, Patrick~A.
  O'Donnell, W.~Eric~L. Grimson, Pierre Tournassoud, and Alain Lanusse.
\newblock Handey: {A} robot system that recognizes, plans, and manipulates.
\newblock In {\em Proceedings of the 1987 {IEEE} International Conference on
  Robotics and Automation, Raleigh, North Carolina, USA, March 31 - April 3,
  1987}, pages 843--849. {IEEE}, 1987.

\bibitem[\protect\citeauthoryear{Quigley \bgroup \em et al.\egroup
  }{2009}]{Quigley09}
Morgan Quigley, Brian Gerkey, Ken Conley, Josh Faust, Tully Foote, Jeremy
  Leibs, Eric Berger, Rob Wheeler, and Andrew Ng.
\newblock {ROS}: an open-source {R}obot {O}perating {S}ystem, 2009.

\bibitem[\protect\citeauthoryear{Rajaratnam \bgroup \em et al.\egroup
  }{2016}]{DBLP:conf/ausai/RajaratnamHPST16}
David Rajaratnam, Bernhard Hengst, Maurice Pagnucco, Claude Sammut, and Michael
  Thielscher.
\newblock Composability in cognitive hierarchies.
\newblock In Byeong~Ho Kang and Quan Bai, editors, {\em {AI} 2016: Advances in
  Artificial Intelligence - 29th Australasian Joint Conference, Hobart, TAS,
  Australia, December 5-8, 2016, Proceedings}, volume 9992 of {\em Lecture
  Notes in Computer Science}, pages 42--55. Springer, 2016.

\bibitem[\protect\citeauthoryear{Watkins and Dayan}{1992}]{Watkins1992}
Christopher J. C.~H. Watkins and Peter Dayan.
\newblock Q-learning.
\newblock {\em Machine Learning}, 8(3):279--292, May 1992.

\end{thebibliography}

\end{document}